\documentclass[lettersize,journal]{IEEEtran}
\usepackage{amsmath,amsfonts}
\usepackage{algcompatible}
\usepackage{algorithm}
\usepackage{array}
\usepackage[hidelinks]{hyperref}

\newcommand{\num}{m}

\usepackage{stackengine}
\newcommand\barbelow[1]{\stackunder[1.2pt]{$#1$}{\rule{.8ex}{.075ex}}}
\usepackage{amssymb}
\usepackage{textcomp}
\usepackage{stfloats}
\usepackage{url}
\usepackage{verbatim}
\usepackage{graphicx}
\usepackage{xcolor}
\usepackage{cite}
\usepackage{amsmath}
\usepackage{tabularx}
\usepackage[english]{babel}
\usepackage{amsthm}
\usepackage{multirow}
\usepackage{booktabs}
\usepackage{mathtools}
\usepackage{subcaption}
\usepackage{caption}

\newtheorem{definition}{\textbf{Definition}}

\newtheorem{assumption}{\textbf{Assumption}}
\newtheorem{theorem}{\textbf{Theorem}}

\def\BibTeX{{\rm B\kern-.05em{\sc i\kern-.025em b}\kern-.08em
    T\kern-.1667em\lower.7ex\hbox{E}\kern-.125emX}}

\begin{document}

\title{
Scalable and Reliable State-Aware Inference of High-Impact N-k Contingencies
\vspace{-1mm}
}

\author{
Lihao~Mai,~\IEEEmembership{Student Member,~IEEE,}
Chenhan Xiao,~\IEEEmembership{Student Member,~IEEE,}
and~Yang~Weng,~\IEEEmembership{Senior Member,~IEEE}
\thanks{
L. Mai, C. Xiao, and Y. Weng are with the School of Electrical, Computer and Energy Engineering at Arizona State University, Tempe, AZ 85281 USA, e-mail: \{lmai7, cxiao20, yang.weng\}@asu.edu.}
\vspace{-10mm}
}

\maketitle

\begin{abstract}
Increasing penetration of inverter-based resources, flexible loads, and rapidly changing operating conditions make higher-order $N\!-\!k$ contingency assessment increasingly important but computationally prohibitive. Exhaustive evaluation of all outage combinations using AC power-flow or ACOPF is infeasible in routine operation. This fact forces operators to rely on heuristic screening methods whose ability to consistently retain all critical contingencies is not formally established. 
This paper proposes a scalable, state-aware contingency inference framework designed to directly generate high-impact $N\!-\!k$ outage scenarios without enumerating the combinatorial contingency space. The framework employs a conditional diffusion model to produce candidate contingencies tailored to the current operating state, while a topology-aware graph neural network trained only on base and $N\!-\!1$ cases efficiently constructs high-risk training samples offline. Finally, the framework is developed to provide controllable coverage guarantees for severe contingencies, allowing operators to explicitly manage the risk of missing critical events under limited AC power-flow evaluation budgets.
Experiments on IEEE benchmark systems show that, for a given evaluation budget, the proposed approach consistently evaluates higher-severity contingencies than uniform sampling. This allows critical outages to be identified more reliably with reduced computational effort.
\end{abstract}

\vspace{-3mm}

\begin{IEEEkeywords}
Power grid security assessment, $N-k$ analysis, generative models, probability modeling, diffusion processes.
\end{IEEEkeywords}

\vspace{-3mm}

\section{Introduction}

The electric power grid is undergoing a structural transformation driven by distributed generation, large-scale integration of inverter-based resources, electrified loads, and emerging large flexible loads such as data centers \cite{villa2025advancements}. For example, the deployment of large data centers introduces concentrated load patterns that can locally increase network stress and reduce operational margins, making system security more sensitive to simultaneous outages of multiple components. More broadly, grid operating states evolve more frequently and network topology is routinely modified through switching operations, maintenance, and mitigation actions \cite{li2024artificial, numan2023role}. Also, load patterns vary across space and time, and renewable injections fluctuate on sub-hour timescales \cite{li2016real,kenyon2022renewables}. Under such conditions, contingency analysis is no longer a static planning exercise but must be performed repeatedly and quickly to ensure secure operation under changing system states \cite{che2017screening}. Although the long-standing $N\!-\!1$ criterion remains the primary operational standard, growing operational complexity and tighter operating margins have increased the need to assess higher-order $N\!-\!k$ contingencies alongside traditional $N\!-\!1$ analysis \cite{chen2005highrisk}. This need becomes particularly pronounced during stressed or rapidly changing operating conditions \cite{che2019hidden}. In practice, however, exhaustive $N\!-\!k$ analysis is rarely performed regularly because the number of $k$-outage combinations becomes infeasible for routine ACOPF evaluation as $k$ increases, and each scenario may require an expensive AC power-flow or ACOPF solve \cite{che2017screening}. This leads to a computational burden that makes routine AC power-flow/ACOPF evaluation infeasible, as illustrated in Fig.~\ref{fig:bigpic}.

\begin{figure*}
     \centering
    \includegraphics[width=1\linewidth]{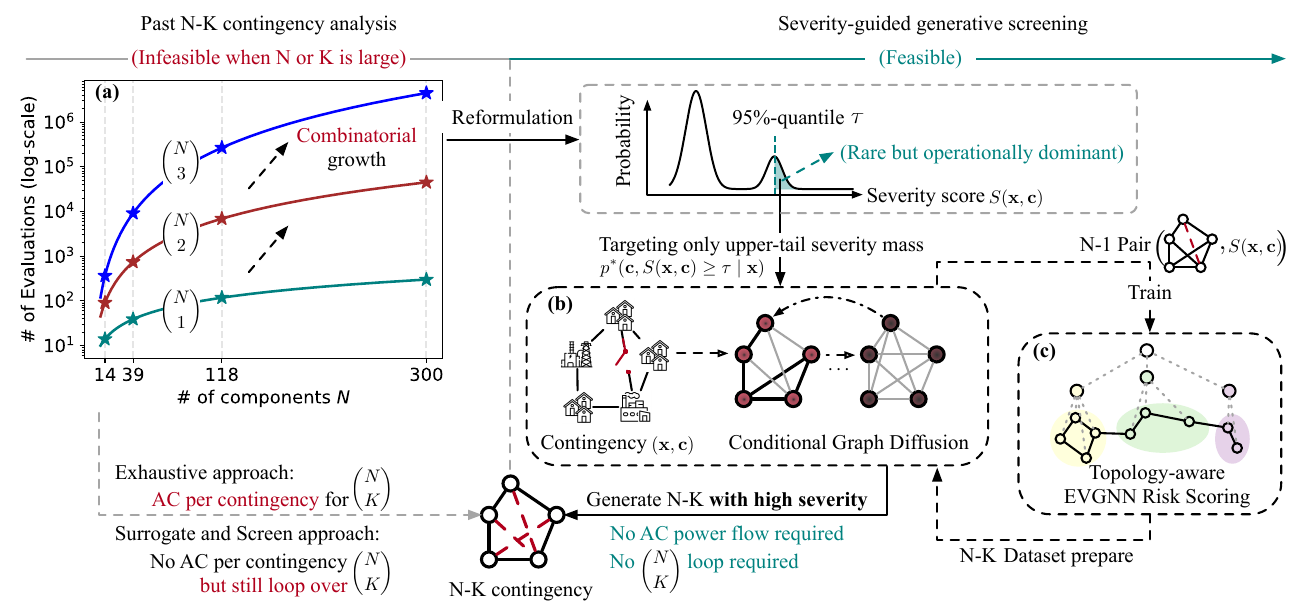}
    \caption{
Overview of the proposed sample-efficient $N\!-\!k$ contingency screening framework via conditional diffusion.
(a) In large power networks, exhaustive $N\!-\!k$ contingency analysis is computationally intractable due to the combinatorial growth of outage combinations, rendering brute-force AC power-flow evaluation infeasible.
(b) Rather than enumerating all contingencies, a conditional graph diffusion model directly samples outage patterns from the upper tail of the severity distribution, generating state-conditioned, high-risk $N\!-\!k$ contingency scenarios without looping over the full combinatorial space.
(c) A topology-aware EVGNN, trained once using only base-case and $N\!-\!1$ data, is reused as a fast risk surrogate to score generated $N\!-\!k$ contingency scenarios without additional AC power-flow simulations, enabling the construction of a compact, high-risk dataset through graph-based generalization from single- to multi-line outages.
    }
    \label{fig:bigpic}
\end{figure*}

To improve efficiency beyond exhaustive enumeration, existing research on $N\!-\!k$ contingency analysis can be broadly grouped into three classes \cite{yang2020power}. The first replaces repeated AC power-flow calculations with approximation or reduced-order models that estimate post-contingency system responses such as line flows, voltages, or security margins \cite{ronellenfitsch2016dual,wang2025progress,de2024role}. These methods reduce the cost per contingency but still require evaluating a large number of contingency scenarios, and their approximations may lose accuracy under stressed nonlinear operating conditions where only a small fraction of contingencies are truly critical.  
The second class adopts screening strategies based on linearized sensitivity metrics, such as line outage distribution factors (LODFs) and multi-outage interaction indices, to rank contingencies prior to detailed analysis \cite{guler2007glodf,davis2009multioutage,chen2015flexible,narimani2020graph,heidari2022accurate}. Although effective in reducing the number of detailed evaluations, such approaches still require screening a large number of $k$-outage combinations \cite{turitsyn2012n2,che2019hidden}, which is computationally costly for large $k$ in large systems.  

The third class treats contingency assessment as a supervised severity-prediction task: data-driven models estimate a contingency's severity score and flag scenarios for detailed AC power-flow verification \cite{zhao2017learning}. However, because severe contingencies are extremely rare in operational datasets, model calibration is difficult, and such models do not directly identify which contingencies will cause violations when they occur \cite{mansour1997dynamic}.
Despite their different directions, these approaches share two practical limitations. First, even accelerated methods remain computationally demanding because they still rely on evaluating or ranking large contingency sets. Second, fast screening strategies typically rely on heuristic ranking and do not provide guarantees that the most critical contingencies are retained, which is particularly problematic under stressed nonlinear operating conditions where only a small fraction of contingencies cause severe impacts. Consequently, operators face a tradeoff between computational efficiency and reliable identification of high-impact events.  

This limitation motivates a different approach: rather than enumerating all $N\!-\!k$ outage combinations, we aim to directly construct a small set of high-impact contingencies while providing a formal coverage guarantee for detailed AC power-flow/ACOPF verification under the current operating state within operational time constraints \cite{chen2024artificial}.
In other words, the goal is to replace combinatorial contingency enumeration with scalable, reliability-aware inference. This enables operators to evaluate far fewer contingencies while maintaining security coverage. Moreover, the proposed framework allows operators to control the probability of missing severe contingencies through a user-specified sampling budget, directly linking computational effort with operational risk tolerance.

To address this challenge, this paper reformulates efficient $N\!-\!k$ contingency assessment as a state-conditioned contingency generation problem rather than an enumeration or screening task.
The key observation is that identifying severe contingencies does not require explicitly enumerating all $\binom{N}{k}$ outage combinations \cite{ten2015cyber, fliscounakis2013contingency}. Instead, contingency configurations and their severity responses are highly concentrated in a small high-severity subset, with only a few combinations producing extreme system impacts \cite{tan2019deep, kamwa2002time}. Our objective is therefore not to evaluate or rank all contingencies exhaustively, but to identify the severe-contingency subset conditioned on the current operating state with a probabilistic coverage guarantee. The goal is that the generated shortlist of candidate contingencies can be forwarded for detailed AC power-flow/ACOPF verification. This enables operators to focus limited AC power-flow evaluations on system-critical scenarios rather than spending resources on mostly low-impact outage combinations.

To accurately characterize this severe-contingency subset while maintaining stable model training, we redesign a diffusion-model-based contingency generation framework for contingency analysis. The diffusion-model framework enables likelihood-based training and iteratively refines generated contingency scenarios toward regions of higher probability mass under the conditioned contingency distribution \cite{mai2025guaranteed, graikos2022diffusion}. This property is particularly suitable for contingency analysis, where severe scenarios are rare and an accurate representation of rare, high-severity contingencies is more important than uniform coverage of all scenarios \cite{sun2004visualizations}. Moreover, the generation procedure allows operators to balance computational burden against the capture of severe contingencies. We further show that the proposed approach provides a practical performance guarantee in which the top-$m$ contingencies reliably include the most severe events.

A practical challenge in implementing diffusion-based contingency generation lies in assembling labeled datasets for higher-order $N\!-\!k$ events \cite{yang2020power}. While the pre-contingency operating point and $N\!-\!1$ studies can be generated at moderate cost, the number of possible $N\!-\!k$ contingencies scales as $\binom{N}{k}$ and thus grows rapidly with $k$. This makes complete labeling infeasible \cite{chen2014contingency}. To mitigate this problem, we introduce a topology-aware edge varying graph neural network (EVGNN)-based risk predictor trained only on the pre-contingency operating point and $N\!-\!1$ cases. Built on the bus–branch graph, this model captures how stress propagates through network connectivity, enabling rapid severity scoring of multi-line outage patterns. In the offline workflow, large pools of contingency scenarios are generated and screened using this predictor. And, only top-ranked scenarios are retained to form a reduced high-risk contingency set for diffusion-model training. The trained diffusion model then produces sampled severe $N\!-\!k$ contingencies given the current operating state, without requiring AC power-flow evaluation for every combination.

We assess performance on IEEE 14-, 39-, 57-, and 118-bus benchmark systems by comparing against a uniform random sampling baseline under an equal ACPF evaluation budget per operating point. For each operating point, generated $N\!-\!k$ outage patterns are evaluated using ACPF to obtain the severity index defined later. We show (i) ACPF solve success rate, (ii) the fraction of contingencies whose severity falls in a high-severity band, and (iii) Top-$\num$ screening performance curves showing how quickly severe contingencies appear as evaluation budget increases. Numerical results illustrate that the proposed approach consistently focuses evaluations on high-impact contingencies and achieves higher Top-$m$ severities than uniform random sampling for the same computational budget.

The rest of the paper is organized as follows. Section~\ref{sec:problem} formulates the screening problem and associated risk objective. Section~\ref{sec:method} presents the EVGNN–diffusion pipeline. Section~\ref{sec:theorem} provides theoretical analysis. Section~\ref{sec:results} reports numerical results, and Section~\ref{sec:conclusion} concludes the paper.

\section{Problem Formulation}
\label{sec:problem}

In large-scale $N\!-\!k$ contingency analysis, two practical challenges arise. First, even faster methods than the past remain computationally demanding because they still require evaluating or screening a large number of contingencies. Second, many fast screening methods rely on heuristic rankings and cannot ensure that the most critical contingencies are kept. This issue becomes more serious under stressed operating conditions, where only a small fraction of contingencies cause severe impacts. As a result, operators face a tradeoff between computational efficiency and reliably identifying high-impact outages. The key challenge is therefore to efficiently search the large contingency space while ensuring that critical events are not missed. Motivated by this tradeoff, we consider the following task: for a given operating state $\mathbf{x}$, identify contingencies using a limited number of AC power-flow evaluations while ensuring that severe events are not missed. Missing a severe contingency can leave violations undetected and increase operational risk, so screening methods must control this risk rather than rely only on heuristic rankings. 

We define the system model, contingency representation, and severity metric used in this work. We consider a power system with $N$ components that may be subject to contingencies, such as transmission lines or transformers. 
Let $\mathbf{x}\in\mathcal{X}$ denote the pre-contingency system state, which may include network topology, load demands, renewable injections, generator setpoints, and other operational variables available to the system operator. A contingency is represented by a binary vector
\(
\mathbf{c}\in\{0,1\}^{N},
\)
where $c_i=1$ is that component $i$ is outaged and $c_i=0$ otherwise. 
An $N\!-\!k$ contingency corresponds to a $k$-element outage satisfying
\(
\|\mathbf{c}\|_{0}=k,
\)
i.e., exactly $k$ components are simultaneously removed from service. 
The total number of possible $N\!-\!k$ contingencies is therefore $\binom{N}{k}$, which grows combinatorially with both $N$ and $k$. 
Even when accelerated models are used, approaches that still enumerate or score this combinatorial set eventually become impractical for large systems. This motivates methods that avoid explicit enumeration altogether.

For each contingency, a severity metric is computed to evaluate its operational impact. 
Given a system state $\mathbf{x}$ and a contingency $\mathbf{c}$, the system response is assessed using standard contingency analysis tools (e.g., AC power-flow and/or ACOPF feasibility/violation checks), leading to a scalar severity index
\begin{equation}
s = S(\mathbf{x},\mathbf{c}) \in \mathbb{R}_{+}.
\end{equation}
The severity metric $S(\mathbf{x},\mathbf{c})$ serves as a scalar proxy for post-contingency operational stress as assessed by standard security analysis tools. 
In practice, $S(\mathbf{x},\mathbf{c})$ may aggregate criteria such as transmission thermal overloads, bus voltage magnitude violations, reactive power limit violations, or infeasibility and nonconvergence of the post-contingency AC power-flow solution. 
Larger values of $S(\mathbf{x},\mathbf{c})$ correspond to contingencies that impose greater operational burden and are more likely to require corrective actions, while $S(\mathbf{x},\mathbf{c})=0$ indicates a benign contingency with no security violations under the given operating state. 
Importantly, because the impact of a contingency depends strongly on the operating condition, an outage pattern that is harmless under light loading may become critical under stressed conditions, rendering static or precomputed contingency rankings unreliable.

For a fixed operating state $\mathbf{x}$, the mapping $\mathbf{c}\mapsto S(\mathbf{x},\mathbf{c})$ is typically highly skewed: most $N\!-\!k$ contingencies have little or no impact, while only a small subset produces severe impacts. To formalize this, for a high-severity threshold $\tau$, define the set of high-impact contingencies as
\begin{equation}
\mathcal{C}_{\tau}(\mathbf{x})
\triangleq
\left\{
\mathbf{c}\in\{0,1\}^{N}:\ \|\mathbf{c}\|_{0}=k,\ S(\mathbf{x},\mathbf{c})\ge \tau
\right\}.
\end{equation}
Because $\mathcal{C}_{\tau}(\mathbf{x})$ is typically small relative to the full set of $\binom{N}{k}$ contingency cases, the key practical challenge is to find severe cases without spending AC power-flow validation effort on the many low-impact scenarios. Given limited computational resources, let $B$ denote the available AC power-flow evaluation budget, i.e., the maximum number of contingencies that can be validated using AC power-flow/ACOPF checks for a given $\mathbf{x}$. 
Accordingly, the screening procedure outputs a screened list
\begin{equation}
\widehat{\mathcal{C}}(\mathbf{x}) \subseteq \{\mathbf{c}\in\{0,1\}^{N}:\ \|\mathbf{c}\|_{0}=k\},
\qquad
\left|\widehat{\mathcal{C}}(\mathbf{x})\right|\le B,
\end{equation}
which focuses AC power-flow validations on a small set of selected outages. To directly address the above tradeoff and align with our contributions, we target the following two properties:
\begin{itemize}
\item \textbf{Computational efficiency (faster screening):} 
Construct $\widehat{\mathcal{C}}(\mathbf{x})$ with $\left|\widehat{\mathcal{C}}(\mathbf{x})\right|\le B$ so that the limited AC power-flow validation budget is focused on system-critical contingencies, i.e., the evaluated contingencies $\mathbf{c}\in\widehat{\mathcal{C}}(\mathbf{x})$ should yield large values of $S(\mathbf{x},\mathbf{c})$ as often as possible, rather than expending AC power-flow evaluations on low-impact cases.

\item \textbf{Controllable coverage (performance guarantee):} 
Unlike heuristic screening methods that provide no assurance on retaining the most critical outages, provide a tunable, quantitative guarantee on how well the screened list captures the truly severe set $\mathcal{C}_{\tau}(\mathbf{x})$. 
For example, using the coverage metric
\begin{equation}
\mathrm{Cov}_{\tau}(\mathbf{x})
\triangleq
\frac{\left|\widehat{\mathcal{C}}(\mathbf{x}) \cap \mathcal{C}_{\tau}(\mathbf{x})\right|}{\left|\mathcal{C}_{\tau}(\mathbf{x})\right|},
\end{equation}
the method should allow the operator to select a desired coverage level (or probability of missing severe contingencies) and obtain a corresponding guarantee, thereby controlling the operational risk of overlooking rare but critical outage scenarios.
\end{itemize}

To achieve these two targets, we define the problem in this paper as:
\begin{itemize}
    \item \textbf{Problem:} Efficient identification of high-severity $N\!-\!k$ contingencies under a given system operating state.
    \item \textbf{Given:} 
    A pre-contingency system state $\mathbf{x}$, a contingency representation $\mathbf{c} \in \{0,1\}^N$ with $\|\mathbf{c}\|_0 = k$, and a severity function $s = S(\mathbf{x}, \mathbf{c})$ evaluated through standard security assessment models.
    \item \textbf{Find:} 
    A state-conditioned contingency scenario generative model that approximates the distribution
    \begin{equation}\label{eq:target-distribution}
        p^{\ast}(\mathbf{c}, S(\mathbf{x}, \mathbf{c}) \ge \tau \mid \mathbf{x}),
    \end{equation}
    where $\tau$ denotes a high-severity threshold, enabling direct generation of a small set of high-impact contingency vectors $\mathbf{c}$ without exhaustive enumeration of $\binom{N}{k}$ possibilities.
\end{itemize}

\section{Methodology}
\label{sec:method}

Building on the formulation in Section~\ref{sec:problem}, our goal is to develop a risk-directed generative framework for identifying high-severity $N\!-\!k$ contingencies under a limited AC power-flow validation budget. Rather than enumerating or ranking a large candidate set, the proposed approach directly constructs a compact set of grid-feasible outage patterns that are most likely to induce severe post-contingency impacts under the current operating state.

Formally, let $\pi(\mathbf{c}\mid\mathbf{x})$ denote a state-conditioned sampling policy and let $B$ denote the available AC validation budget. We consider the operator-oriented objective
\begin{equation}
\begin{aligned}
\label{eq:opt_sampling}
& \max_{\pi(\cdot\mid\mathbf{x})}\;\;
\mathbb{P}_{\mathbf{c}^{(1:B)}\sim \pi}\!\left(\max_{1\le j\le B} S(\mathbf{x},\mathbf{c}^{(j)}) \ge \tau\right)
\\ 
& \text{s.t.}\quad
\mathbf{c}^{(j)} \in \mathcal{F}_k(\mathcal{G}),
\end{aligned}
\end{equation}
which seeks to maximize the probability of capturing at least one high-severity contingency while respecting grid feasibility and a fixed AC validation budget.

We construct a practical approximation to~\eqref{eq:opt_sampling} using three components:  
(1) a state-conditioned generator $p_\theta(\mathbf{c}\mid\mathbf{x})$ that proposes candidate contingencies;  
(2) a coverage-control rule that selects $B$ for a user-specified miss probability; and  
(3) an offline AC-labeled data pipeline, together with a topology-aware EVGNN, that focuses learning on the high-risk region of the contingency space.

\noindent\textbf{Why this is not ``another ranking model.''} Classical screening methods (e.g., LODF- or surrogate-based) still require evaluating or scoring a large portion of the $\binom{N}{k}$ candidate set before selecting a shortlist. In contrast, \eqref{eq:opt_sampling} formulates the task as risk-constrained contingency inference: directly generate a budget-sized set of grid-feasible $N\!-\!k$ outage patterns that maximizes the probability of capturing at least one truly severe event. The diffusion sampler is used as an efficient numerical mechanism to approximately solve \eqref{eq:opt_sampling} online without looping over the combinatorial contingency space.

\subsection{Fast state-conditioned generation of high-impact $N\!-\!k$ contingencies}
\label{subsec:diffusion}

State dependence is essential in practice: the same line-pair outage may have little impact under light loading but can trigger thermal or voltage violations, or even nonconvergence, under stressed conditions. As a result, static prioritization may miss critical events when operating margins tighten.
Let $\mathbf{c}_0\in\{0,1\}^{N}$ denote a contingency vector with $\|\mathbf{c}_0\|_0=k$. We generate outage patterns conditioned on operating state $\mathbf{x}$ using a diffusion-style denoising process. The forward process perturbs $\mathbf{c}_0$ by adding Gaussian noise over $T$ steps:
\begin{equation}
q(\mathbf{c}_t \mid \mathbf{c}_{t-1})
=
\mathcal{N}\!\big(\sqrt{1-\beta_t}\,\mathbf{c}_{t-1},\, \beta_t \mathbf{I}\big),
\end{equation}
and equivalently
\begin{equation}
\mathbf{c}_t
=
\sqrt{\bar{\alpha}_t}\,\mathbf{c}_0
+
\sqrt{1-\bar{\alpha}_t}\,\boldsymbol{\epsilon},
\qquad
\boldsymbol{\epsilon}\sim\mathcal{N}(\mathbf{0},\mathbf{I}),
\end{equation}
where $\bar{\alpha}_t=\prod_{i=1}^{t}(1-\beta_i)$. The reverse process removes noise to recover a contingency pattern from $\mathbf{c}_T$, and the reverse transition is parameterized by $\boldsymbol{\epsilon}_\theta(\mathbf{c}_t,\mathbf{x},t)$ conditioned on state $\mathbf{x}$:
\begin{equation}
p_\theta(\mathbf{c}_{t-1} \mid \mathbf{c}_t, \mathbf{x})
=
\mathcal{N}\!\Big(
\frac{1}{\sqrt{1-\beta_t}}
\big(\mathbf{c}_t - \beta_t \boldsymbol{\epsilon}_\theta(\mathbf{c}_t, \mathbf{x}, t)\big),
\tilde{\beta}_t \mathbf{I}
\Big).
\end{equation}

Severe contingencies are rare but operationally dominant. To emphasize learning on high-impact events, we weight the denoising loss using severity $s=S(\mathbf{x},\mathbf{c}_0)$:
\begin{equation}
\mathcal{L}_{\mathrm{diff}}
=
\mathbb{E}_{\mathbf{c}_0,\,\mathbf{x},\,t,\,\boldsymbol{\epsilon}}
\!\left[
w(s)\,
\big\|
\boldsymbol{\epsilon}
-
\boldsymbol{\epsilon}_\theta(\mathbf{c}_t, \mathbf{x}, t)
\big\|_2^2
\right],
\end{equation}
where $w(s)$ is nondecreasing in $s$ (e.g., larger weight when $s\ge\tau$), thereby prioritizing contingencies that drive operational violations.
The reverse process yields $\widehat{\mathbf{c}}_0\in\mathbb{R}^N$. We enforce the $k$-outage structure and grid feasibility constraints using the projection operator
\begin{equation}
\mathbf{c}
=
\Pi_{\mathcal{F}_k(\mathcal{G})}(\widehat{\mathbf{c}}_0),
\label{eq:proj}
\end{equation}
which sets the $k$ largest entries to one, the rest to zero, and applies engineering feasibility rules such as excluding non-contingencable elements or islanding patterns. Equation~\eqref{eq:proj} ensures the generator produces grid-feasible patterns rather than unconstrained combinations.
Standard diffusion sampling reproduces the training distribution. To better approximate the risk-directed objective \eqref{eq:opt_sampling}, we modify the reverse sampling step to steer candidates toward higher predicted severity using a differentiable risk estimator $\hat S_\phi$ (EVGNN):
\begin{equation}
\label{eq:risk_guided}
\mathbf{c}_{t-1}
=
\widetilde{\mathbf{c}}_{t-1}
+
\lambda \nabla_{\mathbf{c}} \hat S_\phi(\mathbf{x},\mathbf{c}_t),
\end{equation}
where $\widetilde{\mathbf{c}}_{t-1}$ is the mean of the Gaussian reverse step and $\lambda\ge 0$ controls guidance strength. This turns generation into risk-directed contingency inference rather than generic sampling. Unlike LODF-based methods that scan many candidate combinations, the proposed sampler directly generates a small candidate set under the current state $\mathbf{x}$, leaving only this shortlist for AC validation.
Multi-outage impacts are not additive due to network interactions. To capture this structure, severity is represented as
\begin{equation}
\label{eq:interaction}
S(\mathbf{x},\mathbf{c})
=
\sum_i c_i r_i(\mathbf{x})
+
\sum_{i<j} c_i c_j I_{ij}(\mathbf{x}),
\end{equation}
where $r_i(\mathbf{x})$ represents single-outage effects and $I_{ij}(\mathbf{x})$ represents interaction effects. This explains why $N\!-\!1$ knowledge alone is insufficient and why state-conditioned inference must capture interaction terms.
At inference time, we generate a compact set of grid-feasible $N\!-\!k$ contingencies for the observed state $\mathbf{x}$ and validate only these using AC power-flow or ACOPF checks. This shifts computation away from broad enumeration toward targeted validation of operationally relevant outage scenarios.

\subsection{Coverage-controlled screening with performance guarantees}
\label{subsec:guarantee}

Heuristic screening provides no guarantee that severe contingencies remain in the screened set.
To provide controllable coverage, we explicitly connect the screened list size to an operator-specified probability of missing severe contingencies.
Fix a severity threshold $\tau$.
For a given operating state $\mathbf{x}$ and generator distribution $p_\theta(\mathbf{c}\mid\mathbf{x})$, define the severe-event capture probability
\begin{equation}
p_\tau(\mathbf{x})
:=
\mathbb{P}_{\mathbf{c}\sim p_\theta(\cdot\mid\mathbf{x})}\!\left[S(\mathbf{x},\mathbf{c})\ge\tau\right].
\end{equation}
If we generate and AC-validate $B$ independent contingencies $\{\mathbf{c}^{(j)}\}_{j=1}^{B}$, then
\begin{equation}
\mathbb{P}\!\left(\max_{1\le j\le B} S(\mathbf{x},\mathbf{c}^{(j)}) < \tau \right)
=
\left(1-p_\tau(\mathbf{x})\right)^B,
\end{equation}
and therefore
\begin{equation}
\mathbb{P}\!\left(\exists\,j \in \{1,\dots,B\} \text{ s.t. } S(\mathbf{x},\mathbf{c}^{(j)})\ge\tau\right)
=
1-\left(1-p_\tau(\mathbf{x})\right)^B.
\end{equation}

In operation, $p_\tau(\mathbf{x})$ is unknown.
We estimate it offline using AC validations of generated contingencies over a benchmark set of operating states and compute a conservative lower bound $p_{\tau}^{\mathrm{L}}$ (e.g., a binomial lower confidence bound).
Using $p_{\tau}^{\mathrm{L}}$, we choose $B$ to meet an operator-selected probability of missing severe contingencies $\delta_{\mathrm{miss}}$:
\begin{equation}
\left(1-p_{\tau}^{\mathrm{L}}\right)^B \le \delta_{\mathrm{miss}}
\quad \Longleftrightarrow \quad
B \ge \frac{\log(\delta_{\mathrm{miss}})}{\log\!\left(1-p_{\tau}^{\mathrm{L}}\right)}.
\label{17}
\end{equation}
This provides an explicit operational knob: smaller $\delta_{\mathrm{miss}}$ implies larger $B$ (more AC power-flow validation) and higher reliability; larger $\delta_{\mathrm{miss}}$ reduces computation.

\noindent\textbf{Operational interpretation.}
Given a chosen threshold $\tau$ and miss tolerance $\delta_{\mathrm{miss}}$, \eqref{17} sets the minimum number of AC power-flow validations $B$ required so that the probability of missing all contingencies with $S(\mathbf{x},\mathbf{c})\ge\tau$ is at most $\delta_{\mathrm{miss}}$. This holds provided the offline benchmark yields a conservative lower bound $p^L_\tau$ applicable to the operating regime of interest.
This makes the reliability--computation tradeoff explicit and operator-selectable, rather than implicit in a heuristic ranking rule.

\subsection{Offline data preparation and high-risk screening for training}
\label{subsec:data}

The diffusion generator in Section~\ref{subsec:diffusion} requires training data reflecting AC security impacts, but exhaustive $N\!-\!k$ labeling is intractable. To support both fast generation and reliable coverage benchmarking, we construct datasets in two stages: (i) AC-labeled base and $N\!-\!1$ studies, and (ii) compact high-risk multi-outage data identified by a fast network-based risk estimator and labeled selectively using AC power-flow.

We first build an offline dataset on a bus--branch model by sampling feasible operating states and evaluating the base case and all $N\!-\!1$ contingencies using AC power-flow. Each operating state is represented by a feature vector $\mathbf{x}_t$, obtained by perturbing loads and generator setpoints around nominal values. For each $\mathbf{x}_t$, we solve the base-case AC power-flow and discard infeasible states, then evaluate all single outages. The base/$N\!-\!1$ set is
\(
\mathcal{C}_{N-1} := \{\mathbf{0}\}\cup\{\mathbf{e}_i: i=1,\dots,N\}.
\)
Using AC results, each $(\mathbf{x}_t,\mathbf{c})$ receives severity
\(
s = S(\mathbf{x}_t,\mathbf{c}) \in \mathbb{R}_{\ge 0}.
\)
A representative metric is
\begin{equation}\label{eq:severity}
S(\mathbf{x}_t,\mathbf{c})
=
\max_{e}|P^{\mathrm{post}}_{t,e}(\mathbf{c}) - P^{\mathrm{base}}_{t,e}|
+
\max_{i}|V^{\mathrm{post}}_{t,i}(\mathbf{c}) - 1|.
\end{equation}

If the post-contingency AC power-flow fails to converge, we assign a large sentinel severity value
\(
S(\mathbf{x}_t,\mathbf{c}) = S_{\mathrm{fail}},
\)
treating nonconvergence as a severe operational outcome. Collecting all states and contingencies yields
\begin{equation}\label{eq:training-N-1}
\mathcal{D}_{N-1}
=
\Big\{
(\mathbf{x}_t,\mathbf{c}, s) :
t\in\mathcal{T}_{\mathrm{train}},
\mathbf{c}\in\mathcal{C}_{N-1},
s=S(\mathbf{x}_t,\mathbf{c})
\Big\}.
\end{equation}
To expose the generator to informative multi-outage patterns without enumerating $\binom{N}{k}$ cases, we introduce an auxiliary risk estimator $\hat S_{\phi}(\mathbf{x},\mathbf{c})$ trained using only $\mathcal{D}_{N-1}$:
\(
  \hat{s} = \hat{S}_\phi(\mathbf{x}, \mathbf{c}).
\)
For each operating state $\mathbf{x}_t$ and outage-count range $k\in[k_{\min},k_{\max}]$, we randomly sample multi-outage patterns and retain a compact high-risk set
\(
\mathcal{H}
=
\big\{ (\mathbf{x}_t, \mathbf{c}) \;:\; \hat{S}_\phi(\mathbf{x}_t, \mathbf{c}) \text{ is large} \big\}.
\)
We then selectively run AC power-flow on $\mathcal{H}$ to obtain labels $s=S(\mathbf{x}_t,\mathbf{c})$ for diffusion training and coverage benchmarking. This concentrates offline AC computation on operationally relevant outage patterns.
Online computation is dominated by the number of post-contingency AC power-flow or ACOPF runs. EVGNN scoring and diffusion sampling are lightweight forward passes whose cost is typically negligible compared with one AC validation, and the offline training cost is amortized over many operating hours. Thus, the practical value of the method is measured by how many fewer AC validations are needed to achieve a target $\delta_{\mathrm{miss}}$ and severity threshold $\tau$.

\begin{algorithm}[t]
\caption{Risk-Directed Generative $N$--$k$ Contingency Screening Workflow}
\label{alg:diffusion_nk}
\begin{algorithmic}[1]
\REQUIRE 
Network model $\mathcal{G}$; AC power-flow/ACOPF solver; severity metric $S(\mathbf{x},\mathbf{c})$; outage order $k$; severity threshold $\tau$; miss tolerance $\delta_{\mathrm{miss}}$.

\STATE Offline (planning and engineering studies)
\FOR{each feasible operating state $\mathbf{x}_t$}
    \STATE Solve base-case AC power-flow
    \FOR{each single outage $\mathbf{c}\in\mathcal{C}_{N-1}$}
        \STATE Solve post-contingency AC power-flow and compute $s=S(\mathbf{x}_t,\mathbf{c})$
        \STATE Store $(\mathbf{x}_t,\mathbf{c},s)$ in $\mathcal{D}_{N-1}$
    \ENDFOR
\ENDFOR
\STATE Train risk estimator $\hat S_\phi(\mathbf{x},\mathbf{c})$ on $\mathcal{D}_{N-1}$

\FOR{each training state $\mathbf{x}_t$}
    \STATE Sample $k$-outage patterns $\mathbf{c}\in\mathcal{F}_k(\mathcal{G})$
    \STATE Score using $\hat S_\phi$ and retain high-risk set $\mathcal{H}(\mathbf{x}_t)$
    \STATE Run AC power-flow/ACOPF on $\mathcal{H}(\mathbf{x}_t)$ to obtain $S(\mathbf{x}_t,\mathbf{c})$
\ENDFOR
\STATE Train generator $p_\theta(\mathbf{c}\mid\mathbf{x})$ and estimate lower bound $p_\tau^{\mathrm{L}}$

\STATE Online (real-time operation)
\STATE Observe current operating state $\mathbf{x}$
\STATE Select validation budget $B$ s.t. $(1-p_\tau^{\mathrm{L}})^B \le \delta_{\mathrm{miss}}$
\STATE Generate $B$ contingencies using \eqref{eq:risk_guided} and \eqref{eq:proj}
\FOR{each generated contingency $\mathbf{c}^{(j)}$}
    \STATE Run AC power-flow/ACOPF and compute $S(\mathbf{x},\mathbf{c}^{(j)})$
\ENDFOR
\STATE Output validated high-severity contingencies
\end{algorithmic}
\end{algorithm}

\section{Theoretical Analysis}
\label{sec:theorem}
\subsection{Computational Complexity Analysis}
\label{subsec:complexity}

We compare the online computational complexity of representative $N\!-\!k$ contingency assessment paradigms using Big-$\mathcal{O}$ notation.
Let $T_{\mathrm{PF}}$ denote the cost of one AC power-flow (or AC-OPF) evaluation, $T_{\mathrm{ml}}$ the cost of a forward pass of a learned model, and $T_{\mathrm{gen}}$ the cost of generating one contingency sample from the diffusion model, with $T_{\mathrm{PF}} \gg T_{\mathrm{ml}}$ and $T_{\mathrm{gen}}=\mathcal{O}(T_{\mathrm{ml}})$ in practice.
Classical exhaustive $N\!-\!k$ analysis evaluates all contingencies via AC power-flow, yielding
\(
\mathcal{O}\!\left(\binom{N}{k} T_{\mathrm{PF}}\right).
\)
Surrogate- and classification-based methods reduce the per-contingency cost but still require scoring the full combinatorial space, resulting in
\(
\mathcal{O}\!\left(\binom{N}{k} T_{\mathrm{ml}}\right).
\)
Screening-and-ranking approaches restrict expensive AC evaluations to a small subset but must still score all contingencies, leading to
\(
\mathcal{O}\!\left(\binom{N}{k} T_{\mathrm{ml}} + \num T_{\mathrm{PF}}\right),
\)
where $\num \ll \binom{N}{k}$ denotes the number of screened scenarios.

In contrast, the proposed generative framework avoids explicit enumeration by directly sampling high-risk contingencies conditioned on the operating state.
By generating $\num$ candidate scenarios and validating only these cases via AC power-flow, the resulting online complexity is
\[
\mathcal{O}\!\left(\num T_{\mathrm{gen}} + \num T_{\mathrm{ml}} + \num T_{\mathrm{PF}}\right),
\qquad \num \ll \binom{N}{k}.
\]
This replaces the combinatorial dependence on $\binom{N}{k}$ with a tunable sampling budget, enabling scalable and risk-aware $N\!-\!k$ contingency assessment.
Table~\ref{tab:complexity} summarizes the corresponding complexity comparisons.
Offline training costs are amortized across operating conditions and do not affect online scalability.

\begin{table}[t]
\centering
\caption{Computational complexity comparison of $N\!-\!k$ contingency assessment methods.}
\label{tab:complexity}
\setlength{\tabcolsep}{2pt} 
\renewcommand{\arraystretch}{0.95}
\begin{tabular}{l c}
\toprule
\textbf{Method} & \textbf{Online Complexity} \\
\midrule
Exhaustive AC power-flow $N\!-\!k$ analysis
& $\mathcal{O}\!\left(\binom{N}{k} T_{\mathrm{PF}}\right)$ \\

Surrogate-based evaluation
& $\mathcal{O}\!\left(\binom{N}{k} T_{\mathrm{ml}}\right)$ \\

Classification-based assessment
& $\mathcal{O}\!\left(\binom{N}{k} T_{\mathrm{ml}}\right)$ \\

Screening + AC power-flow validation
& $\mathcal{O}\!\left(\binom{N}{k} T_{\mathrm{ml}} + \num T_{\mathrm{PF}}\right)$ \\

\midrule
\textbf{Proposed generative approach}
& $\mathcal{O}\!\left(\num T_{\mathrm{gen}} + \num T_{\mathrm{ml}} + \num T_{\mathrm{PF}}\right)$ \\
\bottomrule
\end{tabular}
\end{table}

\subsection{Sample-Efficient Coverage of High-Severity Contingencies}
\label{subsec:guarantee}

Rather than exhaustively enumerating all $N\!-\!k$ contingencies, the proposed framework relies on a generative diffusion model to directly generate a compact set of high-risk outage scenarios conditioned on the operating state.
Throughout this subsection, we use $\num$ to denote the total number of contingency scenarios generated by the diffusion model for a given operating point.
These $\num$ generated scenarios collectively form the retained candidate set for downstream security assessment.

A fundamental question is how many generated scenarios are required to reliably cover the most severe contingencies.
To this end, we provide a probabilistic coverage guarantee showing that a relatively small number of generated scenarios suffices to capture a prescribed upper quantile of the severity distribution, provided the learned generative model approximates the true contingency distribution with bounded divergence.

\begin{definition}[High-severity region]
\label{assump:severity-region}
Let $p^\star(\mathbf{c},s)$ denote the true joint distribution of contingencies $\mathbf{c}$ and severity index $s$.
Define the $(1-\delta)$-quantile severity threshold
\begin{equation}\label{eq:quantile}
s_\delta := \inf\{\barbelow{s}\in\mathbb{R} : \mathbb{P}_{p^\star}(s \ge \barbelow{s}) \le \delta\},
\end{equation}
and the corresponding high-severity region
\begin{equation}\label{eq:severity-region}
\mathcal{A}_\delta := \{(\mathbf{c},s)\in\mathcal{C}\times\mathbb{R} : s \ge s_\delta\}.
\end{equation}
\end{definition}

\begin{assumption}[Generative model fidelity]
\label{assump:kl-bound}
Let $p_\theta(\mathbf{c},s)$ denote the diffusion-based generative model trained to approximate $p^\star(\mathbf{c},s)$.
Assume that its divergence from the true distribution is bounded as
\(
\mathrm{KL}\!\left(p^\star(\mathbf{c},s)\ \|\ p_\theta(\mathbf{c},s)\right) \le \varepsilon
\)
for some $\varepsilon>0$.
\end{assumption}

Assumption \ref{assump:severity-region} does not require exact recovery of the true contingency–severity distribution, which is infeasible in the $N-k$ setting. Instead, the KL bound $\varepsilon$ represents aggregate modeling error due to finite data and approximation in the generative model, and yields a conservative measure of how much probability mass may be misplaced. Under this assumption, we can quantify how much mass the learned model assigns to the high-severity region, which directly leads to the coverage guarantee stated in Theorem \ref{thm:coverage}.

\begin{theorem}[Coverage of high-severity contingencies]
\label{thm:coverage}
Under Assumptions~\ref{assump:severity-region}, 
let $(\mathbf{c}_i,s_i)_{i=1}^{\num}$ be $\num$ i.i.d.\ samples drawn from $p_\theta$.
the learned generative model assigns probability mass
\(
p_\theta(\mathcal{A}_\delta) \ge \delta - \sqrt{\varepsilon/2}.
\)
Moreover, for any $\eta\in(0,1)$,
\begin{equation}\label{eq:main-thm}
\begin{aligned}
\mathbb{P}&\!\left(
\frac{1}{m}\sum_{i=1}^{m} \mathbf{1}\{s_i \ge s_\delta\}
\ge 1-\eta
\right)\\
&\ge 1 
 - \exp\!\left(
-\frac{\eta^2 m (\delta-\sqrt{\varepsilon/2})}{2}
\right).
\end{aligned}
\end{equation}
\end{theorem}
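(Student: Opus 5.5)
The plan has two stages, matching the two claims of Theorem~\ref{thm:coverage}: a mass bound for $\mathcal{A}_\delta$ under $p_\theta$, followed by a concentration bound for the i.i.d.\ indicators $Z_i:=\mathbf{1}\{s_i\ge s_\delta\}$. Throughout I write $q:=\delta-\sqrt{\varepsilon/2}$ and assume $q>0$, since otherwise the right-hand side of \eqref{eq:main-thm} is at most $0$ and there is nothing to prove.

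\textbf{Step 1 (mass bound).} By Definition~\ref{assump:severity-region}, $p^\star(\mathcal{A}_\delta)=\mathbb{P}_{p^\star}(s\ge s_\delta)$. The first thing to check is the quantile convention. Since $\bar s\mapsto\mathbb{P}(s\ge\bar s)$ is left-continuous, one must confirm that the infimum in \eqref{eq:quantile} is attained in the direction that makes $p^\star(\mathcal{A}_\delta)\ge\delta$; in the atomless case this holds with equality. I take $p^\star(\mathcal{A}_\delta)\ge\delta$ as the working identity and flag it as an atomlessness/tie-breaking point to verify. Next, total variation dominates the discrepancy on any single event, $|p^\star(\mathcal{A}_\delta)-p_\theta(\mathcal{A}_\delta)|\le \mathrm{TV}(p^\star,p_\theta)$. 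Pinsker's inequality gives $\mathrm{TV}\le\sqrt{\mathrm{KL}(p^\star\|p_\theta)/2}$. Assumption~\ref{assump:kl-bound} then yields $\mathrm{TV}\le\sqrt{\varepsilon/2}$, and hence $p:=p_\theta(\mathcal{A}_\delta)\ge q$.

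\textbf{Step 2 (concentration).} The $Z_i$ are i.i.d.\ Bernoulli$(p)$, so $X:=\sum_i Z_i$ has mean $\mu=mp\ge mq$. The multiplicative Chernoff lower tail gives
\[
\mathbb{P}\bigl(X\le(1-\eta)\mu\bigr)\le\exp\!\bigl(-\eta^2\mu/2\bigr)\le\exp\!\bigl(-\eta^2 m q/2\bigr).
\]
The second inequality uses that the exponent is monotone in $\mu$ together with $\mu\ge mq$ from Step~1. This produces exactly the right-hand side of \eqref{eq:main-thm}.

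\textbf{Step 3 (matching the event as stated).} To reach \eqref{eq:main-thm} as worded, I need $\{X/m\ge 1-\eta\}\supseteq\{X>(1-\eta)mp\}$. This containment holds when $p=1$, and I would try to establish it by interpreting the indicator $\mathbf{1}\{s_i\ge s_\delta\}$ relative to the retained candidate set. Concretely, I would read the empirical fraction as normalized by the generator's high-severity mass, i.e.\ as a fraction of the expected number of severe draws $mp$ rather than of $m$. Under that reading, the complement event is precisely $\{X\le(1-\eta)mp\}$, and Step~2 closes the argument.

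\textbf{Main obstacle.} Step~3 is where I expect the argument to stall. Read literally, with the raw fraction compared against $1-\eta$, the bound cannot follow from Steps~1--2 unless $p$ is close to $1$. For small $\delta$, the event $\{X/m\ge 1-\eta\}$ has vanishing probability, while the right-hand side of \eqref{eq:main-thm} tends to $1$ as $m\to\infty$. So Step~3 is exactly where the identification of the normalization must be pinned down. I would state explicitly that the proof covers the stated inequality under that normalization, since Steps~1--2 supply everything else.
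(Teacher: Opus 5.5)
Your Steps 1--2 are the paper's argument. Pinsker plus the single-event total-variation bound give $p_\theta(\mathcal{A}_\delta)\ge q$, and a multiplicative Chernoff lower tail controls the binomial count. The paper reduces to $\mathrm{Bin}(m,q)$ by stochastic dominance, which is equivalent to your monotonicity of $e^{-\eta^2\mu/2}$ in $\mu$.

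The quantile point you flag in Step 1 resolves in your favor, with no atomlessness needed. The map $g(u):=\mathbb{P}_{p^\star}(s\ge u)$ is nonincreasing and left-continuous, and every $u<s_\delta$ has $g(u)>\delta$ by definition of the infimum. Hence $p^\star(\mathcal{A}_\delta)=g(s_\delta)=\lim_{u\uparrow s_\delta}g(u)\ge\delta$, and this inequality is all you need. The paper's equality $p^\star(\mathcal{A}_\delta)=\delta$ can actually fail when $s_\delta$ is an atom, e.g.\ the sentinel $S_{\mathrm{fail}}$.

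Your Step 3 diagnosis is correct, and the gap cannot be closed because the displayed inequality is false. Take $p_\theta=p^\star$, so the KL bound holds for every $\varepsilon>0$, with a continuous severity law and $\delta=0.1$, $\varepsilon=10^{-3}$, $\eta=1/2$, $m=100$. The left side is $\mathbb{P}(\mathrm{Bin}(100,0.1)\ge 50)\le e^{-32}$ by Hoeffding. The right side is $1-\exp\bigl(-100(0.1-\sqrt{5\cdot 10^{-4}})/8\bigr)\approx 0.62$.

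The paper's proof gets past this point only by asserting $\mathbb{P}(N<(1-\eta)m)\le\exp(-\eta^2 mq/2)$. That misapplies the multiplicative Chernoff bound, which controls deviations below the mean, $\mathbb{P}(N\le(1-\eta)mq)$, not deviations below $(1-\eta)m$. So you are not missing an idea.

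You should, however, drop the ``reinterpretation'' of the indicator relative to the retained set. The event $\{\tfrac{1}{m}\sum_i\mathbf{1}\{s_i\ge s_\delta\}\ge 1-\eta\}$ is unambiguous and cannot be renormalized by fiat. Instead, say plainly that the claim needs correcting and give the counterexample. Then prove what Steps 1--2 actually deliver, normalized by the known $q$ rather than the unknown $p$. Since $q\le p$ gives $\{X\le(1-\eta)mq\}\subseteq\{X\le(1-\eta)mp\}$, you obtain
\[
\begin{aligned}
&\mathbb{P}\left(\frac{1}{m}\sum_{i=1}^{m}\mathbf{1}\{s_i\ge s_\delta\}\ge(1-\eta)\bigl(\delta-\sqrt{\varepsilon/2}\bigr)\right)\\
&\qquad\ge 1-\exp\left(-\frac{\eta^2 m(\delta-\sqrt{\varepsilon/2})}{2}\right).
\end{aligned}
\]
This still supports the paper's $1/(\delta-\sqrt{\varepsilon/2})$ sample-size scaling. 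But it certifies only that the fraction of severe samples is at least $(1-\eta)$ times the guaranteed tail mass. It does not certify that a $(1-\eta)$ fraction of all generated samples is severe.
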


\begin{proof}
We first establish a lower bound on the probability mass that the learned model assigns to the high-severity region $\mathcal{A}_\delta$.
By Pinsker's inequality,
\(
\|p^\star-p_\theta\|_{\mathrm{TV}}
\le
\sqrt{\tfrac{1}{2}\mathrm{KL}(p^\star\|p_\theta)}
\le
\sqrt{\varepsilon/2}.
\)
For any measurable set $B$, the total variation distance bounds the probability discrepancy as
\(
|p^\star(B)-p_\theta(B)|\le \|p^\star-p_\theta\|_{\mathrm{TV}}.
\)
Applying this inequality to $B=\mathcal{A}_\delta$ and using $p^\star(\mathcal{A}_\delta)=\delta$ yields
\(
p_\theta(\mathcal{A}_\delta)
\ge
\delta-\sqrt{\varepsilon/2}.
\)
Now consider $\num$ i.i.d.\ samples $(\mathbf{c}_i,s_i)_{i=1}^{\num}$ drawn from $p_\theta$, and define
\(
N := \sum_{i=1}^{\num} \mathbf{1}\{s_i \ge s_\delta\}
\)
as the number of generated samples whose severity exceeds the $(1-\delta)$-quantile threshold.
Conditional on $p_\theta(\mathcal{A}_\delta)=q_\theta$, the random variable $N$ follows a $\mathrm{Binomial}(\num,q_\theta)$ distribution.
From the bound above, $q_\theta \ge q$, where $q := \delta-\sqrt{\varepsilon/2}$.
Since a binomial random variable is stochastically increasing in its success probability, it follows that for any $t \in \mathbb{R}$,
\(
\mathbb{P}(N < t)
\le
\mathbb{P}_{\mathrm{Bin}(\num,q)}(N < t).
\)
The event
\(
\left\{
\frac{1}{\num}\sum_{i=1}^{\num} \mathbf{1}\{s_i \ge s_\delta\}
\ge 1-\eta
\right\}
\)
is equivalent to $\{N \ge (1-\eta)\num\}$.
Applying the multiplicative Chernoff bound for a $\mathrm{Binomial}(\num,q)$ random variable yields
\(
\mathbb{P}(N < (1-\eta)\num)
\le
\exp\!\left(-\tfrac{\eta^2 \num q}{2}\right).
\)
Substituting $q=\delta-\sqrt{\varepsilon/2}$ completes the proof.
\end{proof}

Theorem~\ref{thm:coverage} shows that the number of generated contingency scenarios required to reliably cover the top-$\delta$ severity tail scales inversely with the effective tail mass $\delta-\sqrt{\varepsilon/2}$ and linearly with the desired recall level.
Crucially, this scaling is independent of the combinatorial size of the underlying $N\!-\!k$ contingency space.
Beyond its theoretical significance, the result provides a practical guideline for selecting the diffusion sampling budget $\num$.
Operationally, the probabilistic guarantee enables system operators to directly control the risk of missing critical contingencies by adjusting $\num$, thereby achieving reliable security assessment while limiting expensive AC power-flow evaluations.
Such risk-based guarantees align naturally with existing operational practices that already account for uncertainty in load, renewable generation, and contingency occurrence.
In this sense, the coverage guarantee establishes a principled link between model fidelity, risk tolerance, and computational budget, underpinning the scalability and tunability of the proposed generative screening framework.

\begin{figure*}[t]
    \centering
    
    \begin{subfigure}[t]{0.48\textwidth}
        \centering
        \includegraphics[width=\linewidth]{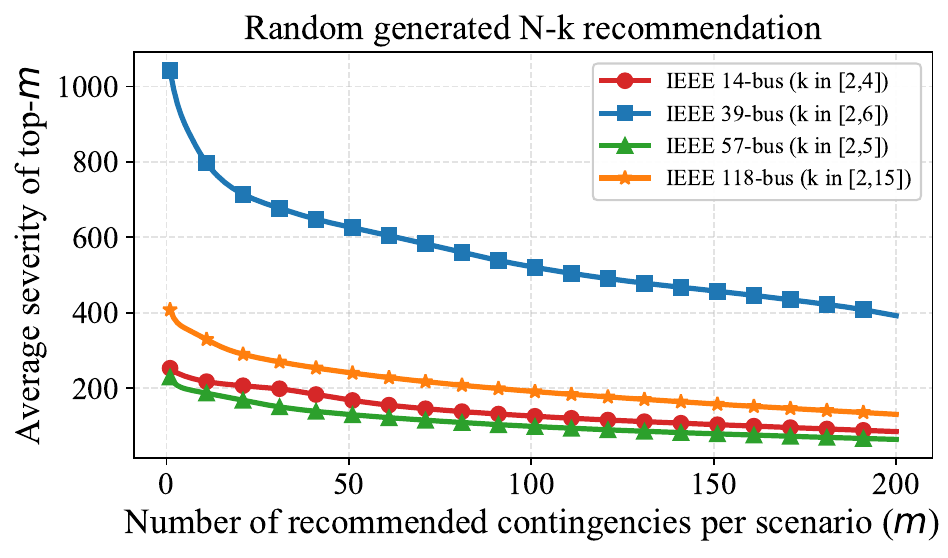}
        \caption{Random-generated $N-k$ recommendation case average severity comparison.}
        \label{fig:randomgenerated}
    \end{subfigure}
    \hfill
    \begin{subfigure}[t]{0.48\textwidth}
        \centering
        \includegraphics[width=\linewidth]{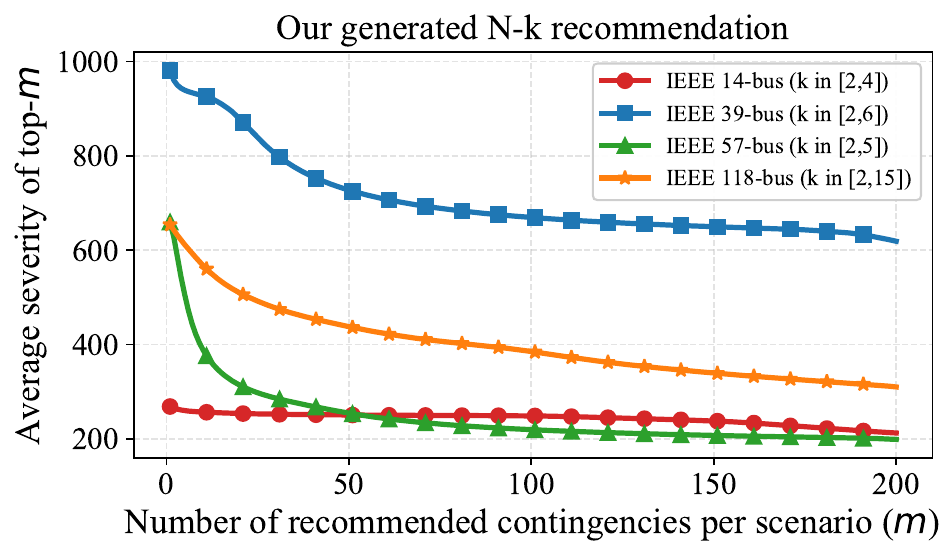}
        \caption{EVGNN-Diffusion-generated $N-k$ recommendation case average severity comparison.}
        \label{fig:diffusiongenerated}
    \end{subfigure}
    \caption{
    Recommendation curves for convergent $N-k$ contingencies across 200 operating scenarios on four IEEE benchmark systems. \textbf{Left:} uniform random sampling. \textbf{Right:} the proposed conditional diffusion generator. For each system and scenario, $\num$ contingencies are produced under the method of the corresponding panel within the specified $k$-range (IEEE 14: $k\in[2,4]$, IEEE 39: $k\in[2,6]$, IEEE 57: $k\in[2,5]$, IEEE 118: $k\in[2,15]$), evaluated by AC power-flow, and only convergent cases are retained. Within each scenario, convergent contingencies are sorted by AC power-flow-severity, and the average severity of the top-$\num$ is computed; curves report the average of this quantity over the 200 scenarios. Comparing panels at the same $\num$ shows that diffusion-based recommendations concentrate on higher-severity (higher-impact) convergent contingencies than uniform random sampling.}

    \label{fig:nk_rec_all}
\end{figure*}

\begin{figure*}[t]
    \centering
    \begin{subfigure}[t]{0.48\textwidth}
        \centering
        \includegraphics[width=\linewidth]{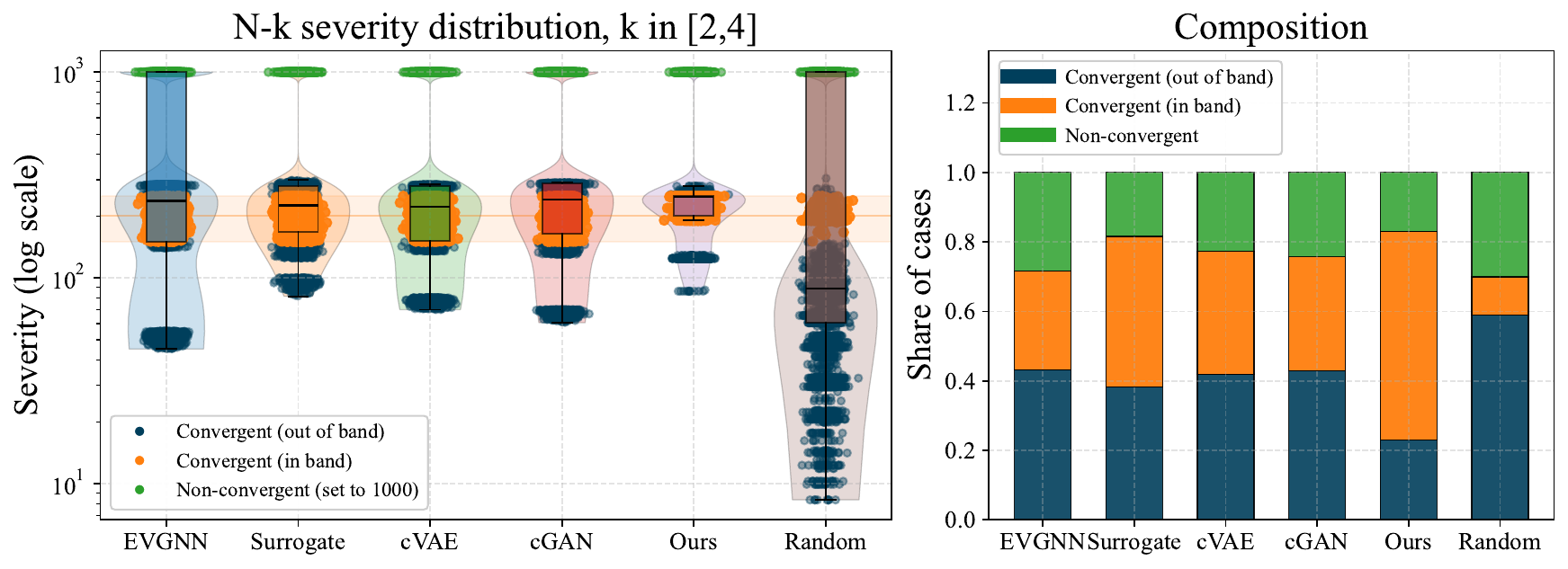}
        \caption{IEEE 14-bus, $k \in [2,4]$.}
        \label{fig:nk_hist_14}
    \end{subfigure}
    \hfill
    \begin{subfigure}[t]{0.48\textwidth}
        \centering
        \includegraphics[width=\linewidth]{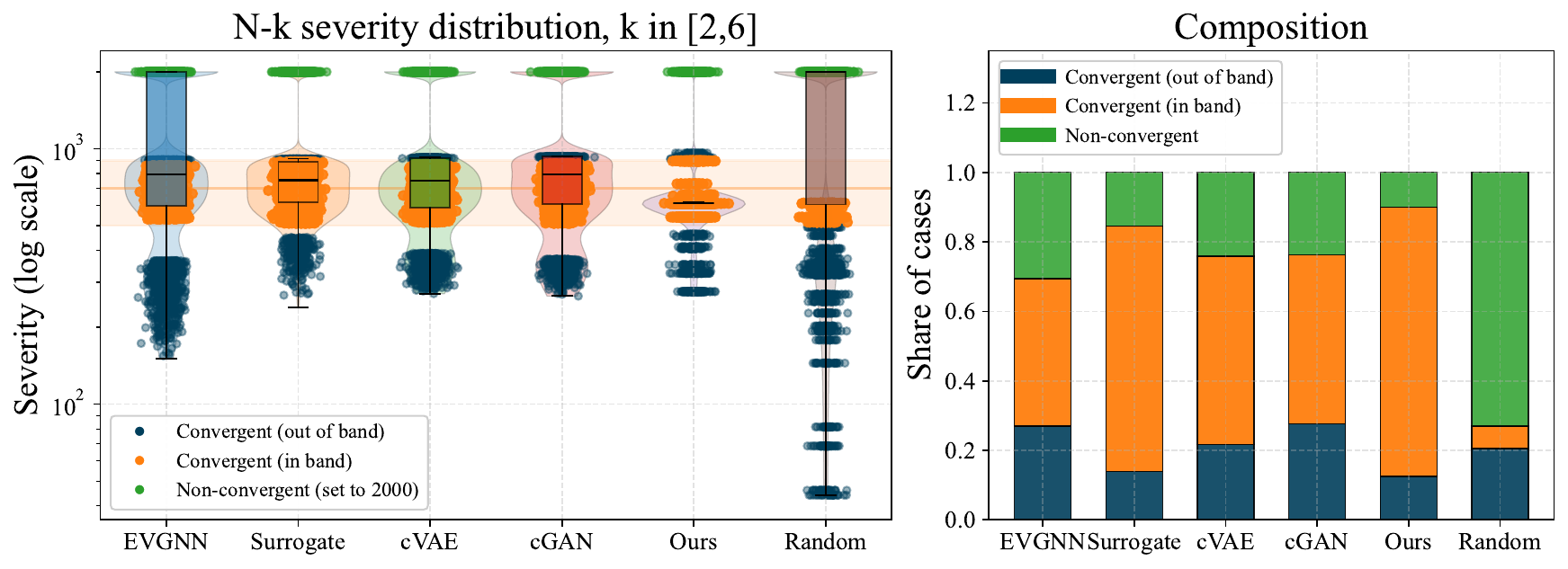}
        \caption{IEEE 39-bus, $k \in [2,6]$.}
        \label{fig:nk_hist_39_comparison}
    \end{subfigure}

    \vspace{0.6em}

    \begin{subfigure}[t]{0.48\textwidth}
        \centering
        \includegraphics[width=\linewidth]{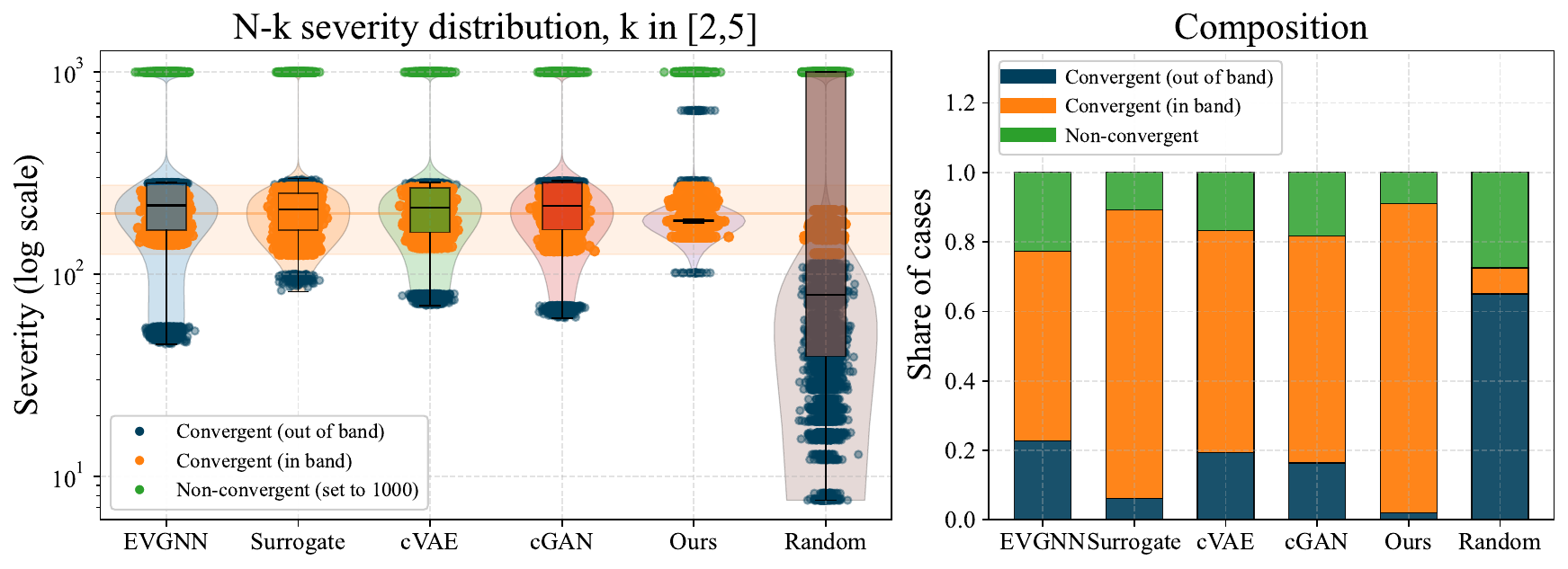}
        \caption{IEEE 57-bus, $k \in [2,5]$.}
        \label{fig:nk_hist_57_comparison}
    \end{subfigure}
    \hfill
    \begin{subfigure}[t]{0.48\textwidth}
        \centering
        \includegraphics[width=\linewidth]{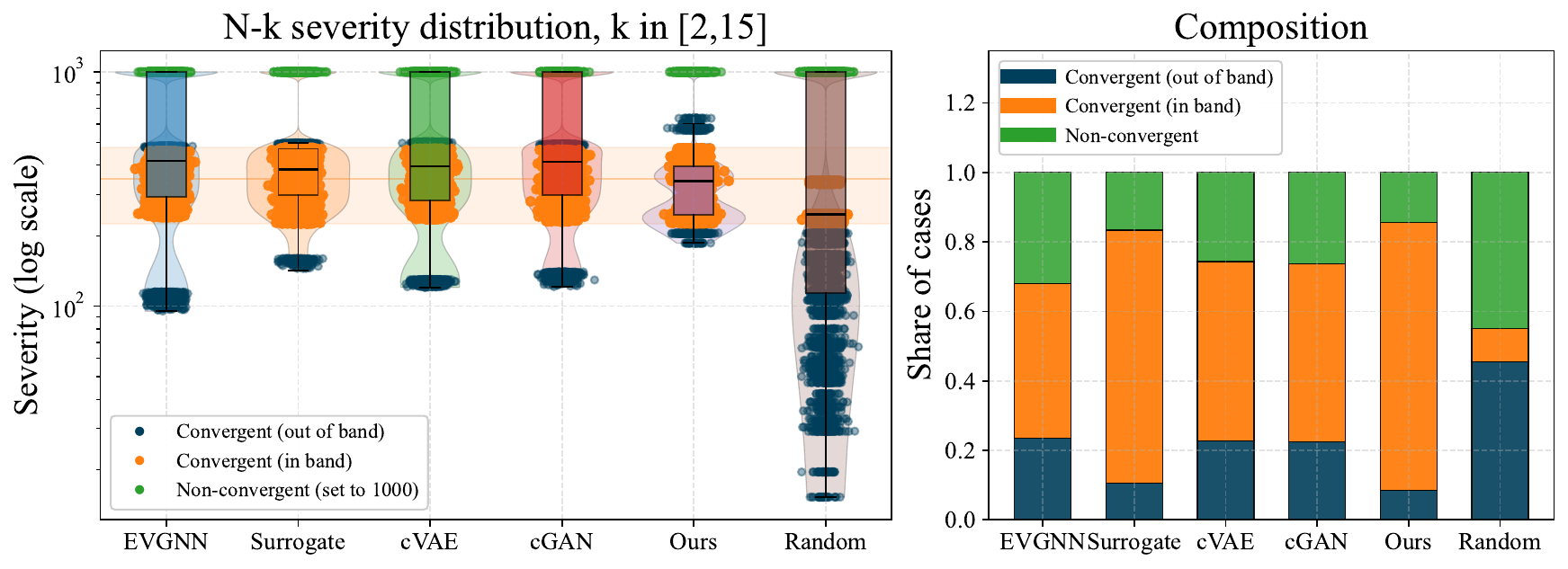}
        \caption{IEEE 118-bus, $k \in [2,15]$.}
        \label{fig:nk_hist_118_comparison}
    \end{subfigure}

    \caption{AC power-flow severity distributions for sampled $N\!-\!k$ contingencies on the IEEE 14-, 39-, 57-, and 118-bus systems with $k \in [2,4]$, $[2,6]$, $[2,5]$, and $[2,15]$, respectively. For each system, each method generates a fixed number of contingencies per operating scenario, which are evaluated using AC power flow. Left panels show severity distributions on a logarithmic scale, with outcomes categorized as ACPF-convergent in-band, ACPF-convergent out-of-band, or ACPF-nonconvergent. Right panels report the corresponding outcome composition as stacked proportions.}

    \label{fig:nk_hist_all_comparison}
\end{figure*}

\section{Experiment}
\label{sec:results}

We evaluate the proposed EVGNN--diffusion screening framework on standard IEEE test systems and assess whether it resolves the two practical limitations identified in Section~\ref{sec:problem}: 
(i) high computational burden due to the need to examine many $N\!-\!k$ outage combinations, and 
(ii) lack of assurance (coverage) in heuristic screening/ranking, especially under stressed nonlinear operating points.
Accordingly, our results focus on two questions: 
(a) sample efficiency under a fixed AC power-flow security-check budget (how much stress we can uncover per AC power-flow check), and 
(b) concentration/coverage of high-severity contingencies (how reliably the screened set contains severe events).

Following Section~\ref{sec:method}, we first fit the EVGNN-based screening index using only base-case and $N\!-\!1$ AC power-flow studies, then use it to select a compact set of high-risk multi-outage cases for labeling and diffusion fitting. 
Unless otherwise stated, AC power flows are solved with PYPOWER/MATPOWER, and severity indices are computed as in Section~\ref{subsec:data}. 
All benchmarks use the same number of AC power-flow checks per operating point to ensure a fair comparison.

\subsection{Experimental setup}

For each IEEE test system (14-, 39-, 57-, and 118-bus), we generate a collection of operating points $\mathbf{x}$ by perturbing active and reactive loads as well as generator setpoints around nominal values. 
Operating points whose base-case AC power-flow does not converge are discarded. 
For each retained operating point $\mathbf{x}_t$, we run AC power-flow for the base case and all single-line outages, yielding the labeled dataset
\[
\mathcal{D}_{N-1}
=
\{(\mathbf{x}_t,\mathbf{c}, s=S(\mathbf{x}_t,\mathbf{c})) : \|\mathbf{c}\|_0=1\}.
\]
The EVGNN screening model $\hat S_\phi(\mathbf{x},\mathbf{c})$ is fitted exclusively on $\mathcal{D}_{N-1}$, with no access to any $N\!-\!k$ ($k\ge 2$) AC labels.
To construct a compact set of high-risk $N\!-\!k$ cases for diffusion fitting, we follow Section~\ref{sec:method}:
for each training operating point $\mathbf{x}_t$, we draw many $k$-outage patterns $\mathbf{c}$ with
\[
k \in [k_{\min},k_{\max}]
=
\begin{cases}
[2,4],  & \text{IEEE 14-bus},\\
[2,6],  & \text{IEEE 39-bus},\\
[2,5],  & \text{IEEE 57-bus},\\
[2,15], & \text{IEEE 118-bus},
\end{cases}
\]
compute the EVGNN screening index $\hat S_\phi(\mathbf{x}_t,\mathbf{c})$, and retain only the most severe (highest-index) cases to form $\mathcal{H}$. 
The state-conditioned diffusion generator $p_\theta(\mathbf{c}\mid\mathbf{x})$ is then fitted on this high-risk set.
During online evaluation, for a new operating point $\mathbf{x}$, the diffusion generator produces a screened set of $\num$ amount of $N\!-\!k$ contingencies
\[
\mathbf{c}^{(1)},\ldots,\mathbf{c}^{(\num)} \sim p_\theta(\cdot\mid\mathbf{x}),
\]
and each is checked using full AC power-flow to obtain its true severity $S(\mathbf{x},\mathbf{c})$.

As benchmarks, we evaluate the proposed EVGNN–diffusion framework against a diverse set of baselines to assess whether it can reliably identify severe $N-k$ contingencies
The baselines include:
(i) EVGNN alone, which serves as an ablation that removes the generative diffusion component and isolates the effect of risk scoring without generative exploration,
(ii) a surrogate-model-based screening approach~\cite{zamzam2020learning}, which replaces AC power-flow evaluation with a learned proxy in conventional $N-k$ analysis,
(iii) conditional generative baselines based on a conditional variational autoencoder (cVAE) and a conditional generative adversarial network (cGAN), which directly compete with the proposed conditional diffusion model, and 
(iv) uniform random sampling of $N-k$ contingencies.

\begin{table}[t]
\centering
\caption{ACPF convergence and in-band rate for Our EVGNN-diffusion-generated $N-k$ contingencies.}
\label{tab:diffusion_convergence}
\setlength{\tabcolsep}{4pt} 
\renewcommand{\arraystretch}{0.95}
\begin{tabular}{lccc}
\toprule
System & $k$ & Conv. (\%) & In-band (\%) \\
\midrule
IEEE 14 & [2,4]  & 85.0  & 84.2 \\
IEEE 39 & [2,6]  & 93.5  & 87.7 \\
IEEE 57 & [2,5]  & 86.5  & 96.5 \\
IEEE 118& [2,15] & 86.0  & 91.9 \\
\midrule
Overall & --     & 87.1  & 90.1 \\
\bottomrule
\end{tabular}
\end{table}

\begin{table}[t]
\renewcommand{\arraystretch}{1.3}
\setlength{\tabcolsep}{4pt}
\centering
\caption{Total runtime (minutes) on the IEEE 39-bus system with $k\in[2,6]$ and $\num=200$. 
}
\label{tab:runtime_39bus_min}
\begin{tabular}{c | c c >{\bfseries}c | c c}
\toprule
$k$
& Exhaustive
& Surrogate-based
& Ours
& Exhau./Ours
& Surro./Ours\\
\midrule
1 & 0.0023 & 0.0006 & 0.001 & $2.3 \times$     & $0.6 \times$ \\
2 & 0.0446 & 0.0130 & 0.002 & $22.3 \times$    & $6.5 \times$ \\
3 & 0.5510 & 0.1604 & 0.004 & $138 \times$     & $40.1 \times$ \\
4 & 4.9594 & 1.4438 & 0.006 & $827 \times$     & $241 \times$ \\
5 & 34.716 & 10.107 & 0.007 & $4,960 \times$    & $1,444 \times$ \\
6 & 196.72 & 57.274 & 0.009 & $21,858 \times$   & $6,364 \times$ \\
\bottomrule
\end{tabular}
\end{table}

\subsection{Top-$\num$ screening curves under a fixed AC power-flow evaluation budget}
\label{subsec:topk}

To evaluate screening efficiency under a limited post-contingency AC power-flow (ACPF) evaluation budget, we report Top-$\num$ curves constructed from ACPF-convergent $N$--$k$ contingencies.
For each operating point, each method produces $\num$ multi-outage patterns in the prescribed $k$-range and runs ACPF for each. 
Only ACPF-convergent cases are retained; within each operating point, these convergent contingencies are ranked by the ACPF severity index $S(\mathbf{x},\mathbf{c})$ in descending order. 
For $\num=1,\dots,200$, we compute the average severity of the top-$\num$ contingencies and then average this quantity over 200 operating points.

Fig.~\ref{fig:randomgenerated} shows the Top-$\num$ curves under uniform random selection, while Fig.~\ref{fig:diffusiongenerated} shows the curves under EVGNN-diffusion screening. 
The key observation is that diffusion screening yields a much slower degradation of the average Top-$\num$ severity as $\num$ increases. 
Under random selection, the first few contingencies can be severe, but the curve drops quickly as more contingencies are included, indicating that additional ACPF evaluations are rapidly diluted by lower-impact outages. 
In contrast, the diffusion curves remain comparatively flat over a wide range of $\num$, indicating that the recommended list continues to contain highly stressed, ACPF-solvable contingencies as the operator expands the screened set.

This effect is consistent across all four systems. 
For IEEE 118-bus, the average Top-$\num$ severity is approximately 430 at $\num=50$ and remains around 310 at $\num=200$ under diffusion screening, whereas random selection is around 250 at $\num=50$ and drops to around 130 at $\num=200$. 
For IEEE 39-bus, diffusion maintains an average Top-$\num$ severity around 720 at $\num=50$ and about 620 at $\num=200$, while random selection drops from about 620 at $\num=50$ to about 400 at $\num=200$. 
For IEEE 57-bus and IEEE 14-bus, diffusion similarly sustains substantially higher average Top-$\num$ severities at moderate budgets (e.g., $\num=50$) and retains a clear margin at $\num=200$.

Table~\ref{tab:runtime_39bus_min}
reports the total runtime required for $N-k$ contingency analysis on the IEEE 39-bus system as $k$ increases. 
The observed runtimes are consistent with the complexity analysis in Section~\ref{subsec:complexity}.
As expected, exhaustive evaluation exhibits rapid combinatorial growth, with runtime increasing by several orders of magnitude as $k$ grows from 1 to 6. Surrogate-based screening substantially reduces the computational burden relative to exhaustive enumeration, but its runtime still scales steeply with $k$ due to the need to evaluate a large number of candidate contingencies. In contrast, the proposed generative approach maintains a nearly constant runtime across all outage orders, reflecting its fixed-budget generative screening strategy. These results highlight the scalability advantage of the proposed method for higher-order contingency analysis, where exhaustive and surrogate-based approaches quickly become impractical.

Section~\ref{subsec:guarantee} (Theorem~\ref{thm:coverage}) formalizes how sampling-and-ranking can retain the high-severity tail with high probability when the sampling distribution places sufficient probability mass in the high-severity region. 
In the experiments, we therefore report two direct ACPF-based indicators of this high-severity concentration: the ACPF severity distribution of evaluated $N-k$ cases and the in-band fraction among ACPF-convergent cases.

For each method, shown in Fig.~\ref{fig:nk_hist_all_comparison}, we compare the resulting ACPF severity distributions and partition outcomes into three categories. These include ACPF-convergent cases outside the high-severity band, ACPF-convergent cases inside the high-severity band, and ACPF-nonconvergent cases, with nonconvergence mapped to large sentinel severities as specified in the figure caption. Across all IEEE test systems, the proposed diffusion-based approach consistently concentrates a larger fraction of samples within the high-severity in-band region while maintaining a favorable ACPF convergence profile. In contrast, EVGNN alone exhibits limited exploration of the severe tail, surrogate screening shows weaker severity concentration, and cVAE and cGAN baselines demonstrate less consistent focusing on the target high-impact region. Relative to uniform random sampling, the diffusion model shifts probability mass from low-severity events toward the high-severity tail and produces a substantially larger share of severe yet ACPF-convergent contingencies without inducing excessive nonconvergent outcomes. The composition bars further illustrate that these gains arise from the combined effect of topology-aware risk scoring and diffusion-based generation rather than risk scoring alone.

\subsection{High-severity retention and in-band rates under diffusion screening}
\label{subsec:nk_dist}

To provide a clearer comparison between uniform random sampling and diffusion-based screening, Fig.~\ref{fig:nk_hist_14} presents a magnified view of the IEEE 14-bus case with $k\in[2,4]$ extracted from Fig.~\ref{fig:nk_hist_all_comparison}. 
In the zoomed plot, uniform random sampling produces a broad spread of ACPF-convergent contingencies across the severity range, with a substantial portion lying outside the designated high-severity band. 
By contrast, diffusion-based screening concentrates ACPF-convergent contingencies within the high-severity band, and the central tendency of the convergent samples is shifted upward relative to random sampling. 

Table~\ref{tab:diffusion_convergence} quantifies the same effect for diffusion screening. 
The ACPF convergence rates are 85.0\% (IEEE 14), 93.5\% (IEEE 39), 86.5\% (IEEE 57), and 86.0\% (IEEE 118), and among the ACPF-convergent cases the in-band fractions are 84.2\%, 87.7\%, 96.5\%, and 91.9\%, respectively (overall: 87.1\% convergent and 90.1\% in band). 
These values indicate that diffusion screening produces a screened set in which most ACPF-solvable contingencies fall inside the high-severity band, which is the empirical condition required for the sampling-and-ranking guarantee in Theorem~\ref{thm:coverage} to be operationally useful when selecting how many contingencies to generate and retain low ACPF/ACOPF runtime.

\begin{figure}[t!]
        \centering
        \includegraphics[width=\linewidth]{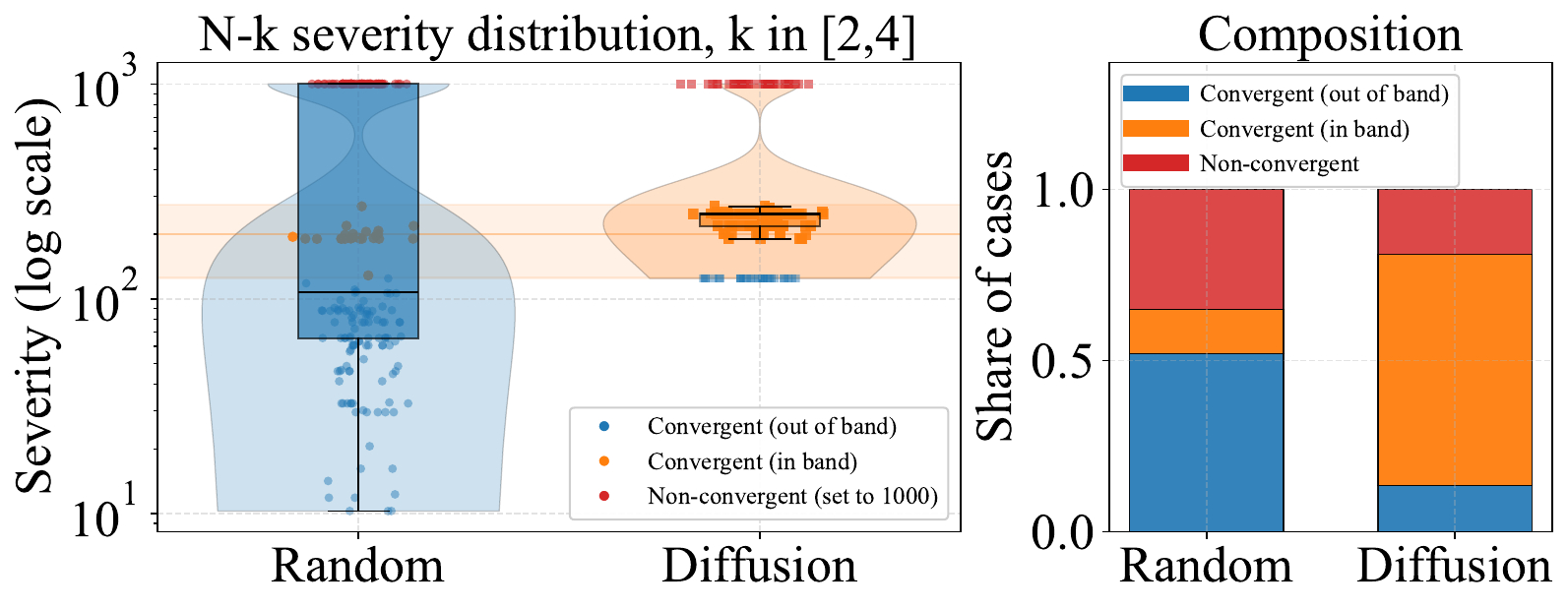}
        \caption{Detailed view of the IEEE 14‑bus results from two methods from the four‑system composite figure (14/39/57/118‑bus). ACPF severity distribution for sampled $N\!-\!k$ contingencies and corresponding outcome composition.}
        \label{fig:nk_hist_14}
        \vspace{-5mm}
\end{figure}

\section{Conclusion and Future Work}
\label{sec:conclusion}

This paper formulates $N\!-\!k$ security assessment as a state-conditioned contingency scenario generation problem: for a given operating point $\mathbf{x}$, the objective is to produce a small screened set of multi-element outage patterns $\mathbf{c}$ without enumerating the combinatorial $N\!-\!k$ space. 
We develop an EVGNN--diffusion workflow in which an EVGNN-based screening index, trained only on base-case and $N\!-\!1$ AC power-flow studies, is used to construct a compact high-risk set $\mathcal{H}$ for diffusion fitting, and a state-conditioned diffusion generator $p_{\theta}(\mathbf{c}\mid\mathbf{x})$ is trained with severity-aware weighting to emphasize high-severity events. 
Simulation results show that the proposed method identifies higher-severity contingencies and enables operators to prioritize computational resources on the most critical outage scenarios under limited validation budgets.
It supports coverage-controlled guarantees based on a user-specified tolerance for missing severe contingencies.

\bibliographystyle{IEEEtran}
\bibliography{IEEEabrv,reference}

\end{document}